\def\eqref#1{equation~\ref{#1}}
\def\1{\bm{1}}
\DeclareMathAlphabet{\mathsfit}{\encodingdefault}{\sfdefault}{m}{sl}
\SetMathAlphabet{\mathsfit}{bold}{\encodingdefault}{\sfdefault}{bx}{n}
\DeclareMathOperator*{\argmin}{arg\,min}
\definecolor{darkpurple}{RGB}{48, 25, 52}
\definecolor{darkbrick}{RGB}{128, 0, 0}  
\definecolor{darkforest}{RGB}{0, 59, 0}
\definecolor{darknavy}{RGB}{0, 0, 89}
\newtheorem{theorem}{Theorem}
\newtheorem*{theorem*}{Theorem}
\newtheorem*{lemma*}{Lemma}
\newtheorem{lemma}[theorem]{Lemma}
\newtheorem{definition*}{Definition}
\newcommand{\dd}{\mathrm{d}}
\newcommand{\ours}{LoRA-SB }
\newcommand{\xs}{LoRA-XS }
\newcommand{\lora}{LoRA }
\title{Initialization using Update  Approximation is a \textit{Silver Bullet} for Extremely Efficient Low-Rank Fine-Tuning}
\author{
  \textbf{Kaustubh Ponkshe}\textsuperscript{*1}, 
  \textbf{Raghav Singhal}\textsuperscript{*1}, 
  \textbf{Eduard Gorbunov}\textsuperscript{1}, 
  \textbf{Alexey Tumanov}\textsuperscript{2},\\
  \textbf{Samuel Horvath}\textsuperscript{1}, 
  \textbf{Praneeth Vepakomma}\textsuperscript{1,3} \\
  \textsuperscript{1} Mohamed bin Zayed University of Artificial Intelligence, UAE \\
  \textsuperscript{2} Georgia Institute of Technology, USA
  \textsuperscript{3} Massachusetts Institute of Technology, USA
}
\begin{document}
 \maketitle

\begin{abstract}
Low-rank adapters have become standard for efficiently fine-tuning large language models, but they often fall short of achieving the performance of full fine-tuning. We propose a method, \textbf{LoRA} \textbf{S}ilver \textbf{B}ullet or \textbf{LoRA-SB}, that approximates full fine-tuning within low-rank subspaces using a carefully designed initialization strategy. We theoretically demonstrate that the architecture of LoRA-XS, which inserts a learnable \( r \times r \) matrix between \( B \) and \( A \) while keeping other matrices fixed, provides the precise conditions needed for this approximation. We leverage its constrained update space to achieve optimal scaling for high-rank gradient updates while removing the need for scaling factor tuning. We prove that our initialization offers an optimal low-rank approximation of the initial gradient and preserves update directions throughout training. Extensive experiments across mathematical reasoning, commonsense reasoning, and language understanding tasks demonstrate that our approach exceeds the performance of LoRA (and baselines) while using \textbf{27-90} times fewer learnable parameters, and comprehensively outperforms LoRA-XS. Our findings establish that it is possible to simulate full fine-tuning in low-rank subspaces, and achieve significant parameter efficiency gains without sacrificing performance.
Our code is publicly available at: \url{https://github.com/CERT-Lab/lora-sb}.
\end{abstract}

\section{Introduction}
Pre-trained language models have become central to natural language processing, achieving state-of-the-art performance across diverse tasks \cite{Radford2021LearningTV,Kirillov2023SegmentA, achiam2023gpt}. While these models excel at general-purpose capabilities \cite{bubeck2023sparks, Hao_Song_Dong_Huang_Chi_Wang_Ma_Wei_2022}, adapting them to specific downstream tasks often requires fine-tuning (FT). 
At the same time, full FT, while highly effective, is computationally expensive and impractical at scale.

Parameter-efficient fine-tuning (PEFT) has become vital for adapting large language models (LLMs) under computational constraints. 
Low-rank methods like LoRA \cite{lora} address this by reducing learnable parameters via low-rank updates, sparking advancements in optimization, initialization, structured matrices, and adaptive rank selection \citep{Zhang_Chen_Bukharin_Karampatziakis_He_Cheng_Chen_Zhao_2023, LoRA-Pro, lora_ga}. However, these methods face trade-offs: either retain many parameters to match full FT or sacrifice performance for extreme efficiency \cite{lora, Ding_Qin_Yang_Wei_Yang_Su_Hu_Chen_Chan_Chen_etal._2023,LoRA-Pro}. This raises a critical question: Can we design low-rank methods that achieve full FT-level performance while drastically reducing parameter counts?

\begin{figure*}[!htbp]
    \centering
    \includegraphics[width=1\linewidth]{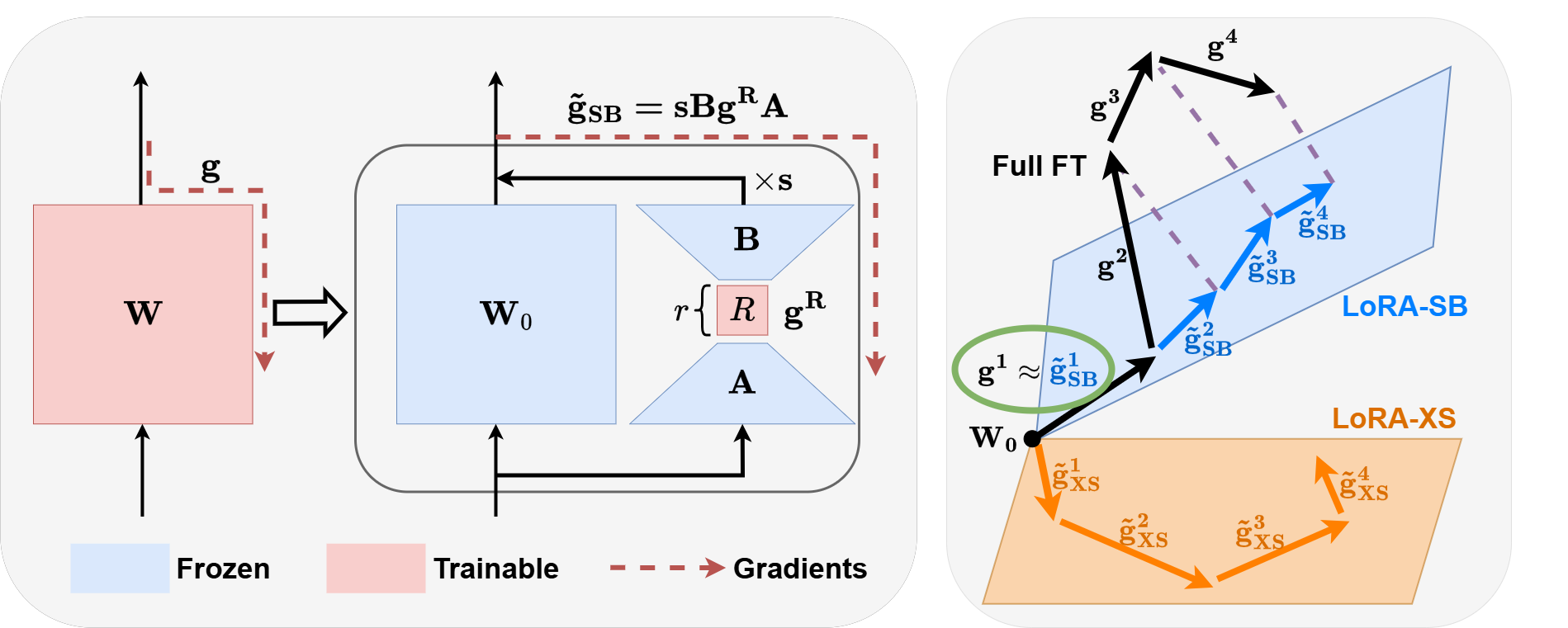}
    \caption
  {
    \textbf{LoRA-SB.} 
    LoRA-XS \citep{LoRA-XS} reduces parameters compared to LoRA \citep{lora} by inserting a learnable \(r \times r \) matrix \( R\) between \(B \) and \(A\), while keeping other matrices fixed, leading to \( W = W_0 + sBRA \). 
    Our method, LoRA-SB, uses the same architecture.
    We find that updating \(R\) using its gradients \( g^R\) is equivalent to updating the full FT matrix \(W\) with an equivalent gradient \(\tilde{g}_{SB}=sBg^RA\).  
    We initialize \(B\), \(R\), and \(A\) such that the equivalent gradient \(\tilde{g}_{SB}\) provably best approximates the full FT gradient \(g\) in low rank subspaces \textbf{at each step}.
    In essence, we simulate the \textbf{entire full FT process} optimally within low-rank subspaces by \textbf{utilizing only the first full FT gradient \(g_1\)}.
 }
    \label{fig:lora-sb}
\end{figure*}


Low-rank decomposition methods operate on a fundamental premise: FT requires learning only a low-rank update to the pre-trained weights. 
However, the \textbf{gradients} computed by these methods do not inherently possess this property. 
For instance, LoRA's gradients need explicit optimization at each step to better approximate the full FT gradient \citep{LoRA-Pro}. Additionally, initialization has emerged as a critical factor in low-rank adaptation, as highlighted by recent works like PiSSA-LoRA \cite{pissa} and LoRA-GA \cite{lora_ga}.

We analyze these limitations in the context of the architecture of LoRA-XS \cite{LoRA-XS}, which inserts a learnable \( r \times r \) matrix between \( B \) and \( A \) while keeping other matrices fixed, and demonstrate that these challenges are even more pronounced. While exploring solutions inspired by LoRA-based methods, we discover a remarkable property unique to LoRA-XS: through careful initialization of $A$ and $B$, we can simulate the full FT optimization in low rank subspaces through \textbf{entire training}, as shown in Figure \ref{fig:lora-sb}. 
Our initialization provides optimal scaling for approximating high-rank full FT gradients and eliminates need for tuning the hyperparameter $\alpha$. 
\textbf{The peak memory usage of LoRA-SB never exceeds that of LoRA} or other baselines, and its training-time overhead relative to LoRA is negligible ($\approx 1.1\%-1.3\%$).
Our key contributions are:
\begin{itemize}[leftmargin=*,itemsep=2pt]
    \item We formalize the limitations of LoRA-XS, showing how its constrained update space leads to suboptimal gradient approximation, initialization sensitivity, and scaling dependence.
    
    \item We propose an initialization strategy derived from using the first step of full FT, which provides an optimal approximation of the initial gradient and preserves update directions throughout.
    
    \item We prove our initialization makes gradient optimization scaling-independent and guarantees convergence by maintaining orthonormal bases, eliminating need for tuning the scaling factor $\alpha$.
    
    \item Through extensive experiments on $4$ models across $16$ datasets covering mathematical reasoning, commonsense reasoning, and language understanding, we demonstrate that LoRA-SB surpasses LoRA while using \textbf{27-90x} less learnable parameters, and comprehensively outperforms LoRA-XS.
\end{itemize}

\section{Methodology} \label{sec:Methodologies}

\subsection{Preliminaries} \label{subsec:Prelims}
In standard FT, a pre-trained weight matrix $W \in \mathbb{R}^{m \times n}$ is updated using the update matrix $\Delta W$ as:
\begin{equation}
W = W_0 + \Delta W
\end{equation}
where $W_0$ is the pre-trained weight. This requires updating $mn$ parameters per layer.
LoRA posits that updates lie in a low-dimensional subspace, parameterizing $\Delta W$ as:
\begin{equation}
W = W_0 + sBA  
\end{equation}
where $B \in \mathbb{R}^{m \times r}$ and $A \in \mathbb{R}^{r \times n}$ are trainable low-rank matrices with rank $r \ll \min(m,n)$, and $s$ is a scaling factor ($\alpha/r$) to stabilize training. This reduces the number of parameters from $mn$ to $r(m+n)$.
LoRA-XS efficiently parameterizes as:
\begin{equation}
W = W_0 + sBRA
\end{equation}
where $B$ and $A$ are fixed, and only $R \in \mathbb{R}^{r \times r}$ is trainable, reducing the number of parameters to $r^2$.
We denote the full FT gradient: $g = \frac{\partial L}{\partial W}$; LoRA-XS gradient: $g^R_{\text{LoRA-XS}} = \frac{\partial L}{\partial R}$; $L$ is the loss function.

\subsection{Motivation} \label{subsec:Motiv}


\xs \cite{LoRA-XS} has significantly fewer learnable parameters than LoRA but performs suboptimally. LoRA-XS's architecture causes constraints on the type of updates it can learn. 
The subspace of learned updates is characterized in Lemma \ref{lemma:subspace}. 
This implies that while $\Delta W$ is constrained to be rank $\leq r$, it also needs to have column and row spaces defined by those of $B$ and $A$, respectively. In contrast, \lora can learn any update $\Delta W$ as long as $\text{rank}(\Delta W) \leq r$. 
Thus, the low expressivity of \xs as compared to \lora can account for the performance drop.

\begin{tcolorbox}[colback=cyan!10,colframe=black]
\begin{lemma}\label{lemma:Subspace}
\label{lemma:subspace}
Let $\Delta W$ be an update learned with LoRA-XS. Then, the set of all possible $\Delta W$, say $ \mathcal{  W}_{LoRA-XS}$, is given as:
$$
 \mathcal{  W}_{LoRA-XS} =\{M \in \mathbb{R}^{m \times n}|
\text{Col}(M) \subseteq \text{Col}(B) \wedge \text{Row}(M) \subseteq  \text{Row}(A)\},
$$
where $\text{Col}(M)$ and $\text{Row}(M)$ are column and row spaces of matrix $M$ respectively.
\end{lemma}

\begin{proof}
    See Appendix \ref{app:proof_lemma_subspace}.
\end{proof}
\end{tcolorbox}
We identify three key limitations, which arise due to this and otherwise:

\noindent
1) \textbf{Inadequate Gradient Approximation:}  \label{subsec:Motiv_Pro} \lora optimization is mathematically equivalent to full FT using a constrained low-rank gradient. The gradient of \lora does not optimally approximate the full gradient, and needs to be tuned at each step. LoRA-Pro \cite{LoRA-Pro} finds that this results in suboptimal performances, and provides a closed form solution to optimize the gradients. In LoRA-XS, the gradient updates are restricted to an even more constrained low-rank space since $A$ and $B$ are fixed. We posit that the limitation becomes particularly severe when the ideal updates lie outside the space spanned by fixed $A$ and $B$, and consequently has a larger impact on performance.

\noindent
2) \textbf{Suboptimal Initialization:} \label{subsec:Motiv_Init} While initialization impacts all low-rank methods, it becomes critical in \xs where $A$ and $B$ are frozen. Unlike LoRA where poor initialization can be compensated through training, LoRA-XS relies entirely on its initial subspace defined by $A$ and $B$. Consider the zero initialization of the $B$ matrix, for example. While \lora may experience some performance degradation in this case \cite{lora_ga, pissa}, the ideal low-rank update $\Delta W$ can still be reached through gradient descent. In fact, zero initialization for the $B$ matrix is commonly used, including in the original \lora paper \cite{lora}. However, in LoRA-XS, this results in no learning, as the product $BRA$ remains zero. \xs uses the most significant subspaces spanned by the columns of pre-trained weights for initialization, inspired by PiSSA \cite{pissa}. This initialization is not aligned well with FT because it fails to capture the specific subspaces relevant to the FT task. 

\noindent
3) \textbf{Scaling Factor Sensitivity:} \label{subsec:Motiv_hyper} The scaling factor $s$, present in almost every \lora based method, requires tuning to maintain stability during training. This factor acts as a bridge between the low-rank and full-rank spaces, compensating for the dimensional mismatch in gradients. Poor tuning of $s$ can lead to unstable training or slow convergence (rsLoRA~\citep{rslora}), adding complexity and potentially limiting practical deployment.




\subsection{Approximation of the full FT gradient}\label{subsec:Pro}



As mentioned, \lora optimization is equivalent to full FT using a constrained low-rank gradient. However, the update generated using the gradients of \lora does not result in the same update which the low-rank gradient would have generated. The following holds true for \xs as well.
To understand this, let us look at the change in weight $W$ and its relationship with changing of low-rank matrix $R$, which can be simply given by  
$ \dd W = -sB (\dd R) A. $ This implies that updating $R$ with gradient $g^R$ is equivalent to updating $W$ with low rank equivalent gradient $\tilde{g}$ in full FT (Definition \ref{def:equi_grad}).
\begin{tcolorbox}[
    colback=cyan!10,
    colframe=black,
    boxsep=1pt,      
    left=6pt,        
    right=6pt,       
    top=6pt,         
    bottom=6pt,      
    before skip=6pt, 
    after skip=6pt   
]
\begin{definition*}
\label{def:equi_grad}
We define the equivalent gradient in LoRA-XS as:
$
\tilde{g} = sBg^R A
$,
where $g^R$ is the gradient of $L$ with respect to $R$.
\end{definition*}
\end{tcolorbox}

The equivalent gradient describes the virtual low-rank gradient of matrix $W$ in \xs optimization process, despite $W$ not being directly trainable. This gradient determines how updates to $R$ affect $W$.
To bridge the performance gap between \xs and full FT, we aim to minimize the discrepancy between the equivalent gradient $\tilde{g}$ and the full gradient $g$. First, we establish the relationship between gradients in \xs optimization in Lemma \ref{lemma:gR}.
\begin{tcolorbox}[
    colback=cyan!10,
    colframe=black,
    boxsep=1pt,      
    left=4pt,        
    right=4pt,       
    top=6pt,         
    bottom=6pt,      
    before skip=6pt, 
    after skip=6pt   
]
\begin{lemma}
\label{lemma:gR}
The gradient of the loss with respect to matrix $R$ can be expressed in terms of the gradient with respect to the weight matrix $W$ as:
$
g^R_{LoRA-XS} = s B^\top  g A^\top 
$.
\end{lemma}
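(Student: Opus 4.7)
The plan is to apply the multivariable chain rule directly to the definition $W = W_0 + sBRA$. Since $B \in \mathbb{R}^{m \times r}$ and $A \in \mathbb{R}^{r \times n}$ are fixed and only $R \in \mathbb{R}^{r \times r}$ varies, each entry of $W$ is an affine function of the entries of $R$, and the dependence is linear through the product $sBRA$.

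First I would compute the elementwise derivative $\partial W_{kl}/\partial R_{ij}$. Writing $(BRA)_{kl} = \sum_{p,q} B_{kp} R_{pq} A_{ql}$, differentiation with respect to $R_{ij}$ picks out the single term with $p=i, q=j$, giving $\partial W_{kl}/\partial R_{ij} = s\,B_{ki} A_{jl}$. Next I would apply the chain rule
\begin{equation*}
(g^R_{LoRA\text{-}XS})_{ij} = \frac{\partial L}{\partial R_{ij}} = \sum_{k,l} \frac{\partial L}{\partial W_{kl}}\,\frac{\partial W_{kl}}{\partial R_{ij}} = s \sum_{k,l} g_{kl}\,B_{ki}\,A_{jl}.
\end{equation*}
Recognizing the double sum as the $(i,j)$ entry of $B^{T} g A^{T}$ yields $g^R_{LoRA\text{-}XS} = s B^{T} g A^{T}$, which is the claimed identity.

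A cleaner alternative, which I would mention as a remark, is to use the matrix differential calculus identity $dL = \mathrm{tr}(g^{T}\,dW) = \mathrm{tr}(g^{T}\, s B\,(dR)\,A) = \mathrm{tr}\bigl((sB^{T} g A^{T})^{T}\,dR\bigr)$, from which the gradient with respect to $R$ can be read off immediately. There is no real obstacle here: the only thing to be careful about is the placement of transposes and the fact that $W_0$ is constant in $R$ so it drops out under differentiation. Because the map $R \mapsto W$ is linear in $R$, this computation is exact and does not rely on any approximation or assumption about the loss $L$ beyond differentiability, so the lemma follows in a few lines.
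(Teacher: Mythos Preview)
Your proof is correct and follows essentially the same idea as the paper's: both apply the chain rule through the linear map $R \mapsto W_0 + sBRA$. The paper routes the computation through an intermediate variable $X = RA$ and applies the matrix chain rule in two stages, whereas you compute the entrywise derivative directly (and note the equivalent trace-differential shortcut); these are stylistic variants of the same argument, and your version is arguably more direct.
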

\begin{proof}
    See Appendix \ref{app:proof_lemma_gr}.
\end{proof}
\end{tcolorbox}
We now formulate our objective to minimize the distance between the equivalent gradient and the full gradient. We do not have access to the full FT gradient $g$ during \xs based FT. Thus we need to find the ideal gradient with respect to $R$, given by $g^R$, and subsequently the optimal approximation $\Tilde{g}$, in terms of the gradient which is available to us during training: $g^R_{LoRA-XS}$. Fortunately, this optimization problem admits a closed-form solution independent of $g$ as described in Theorem \ref{theorem:pro}.

\begin{tcolorbox}[
    colback=cyan!10,
    colframe=black,
    boxsep=1pt,      
    left=6pt,        
    right=6pt,       
    top=6pt,         
    bottom=6pt,      
    before skip=6pt, 
    after skip=6pt   
]
\begin{theorem}\label{theorem:pro}
For full-rank $A$ and $B$ matrices, the optimal solution for the objective \\$\text{min}_{g^R} ||\tilde{g} - g||^2_F$, such that $\Tilde{g} = sB g^R A$, is:
$
g^R = \dfrac{1}{s^2} (B^\top  B)^{-1} g^R_{LoRA-XS} (A A^\top )^{-1}
$.
\end{theorem}
\begin{proof}
   See Appendix \ref{app:proof_pro}.
\end{proof}
\end{tcolorbox}
The closed-form solution in Theorem \ref{theorem:pro} solves the optimization problem $\text{min}_{g^R} ||\tilde{g} - g||^2_F$, but by itself doesn't ensure the loss will decrease when updating $R$. Through Theorem \ref{theorem:loss-neg}, we prove that the change in loss is non-positive ($\Delta L \le 0$). 
This property is fundamental to optimization as it guarantees consistent loss minimization throughout training.


\begin{tcolorbox}[
    colback=cyan!10,
    colframe=black,
    boxsep=1pt,      
    left=6pt,        
    right=6pt,       
    top=6pt,         
    bottom=6pt,      
    before skip=6pt, 
    after skip=6pt   
]
\begin{theorem}\label{theorem:loss-neg}
Consider the update for matrix $R$ using the solution derived in Theorem \ref{theorem:pro}:\\
$
R\leftarrow R - \eta g^R
$,
where $\eta > 0$ is the (sufficiently small) learning rate.
This update guarantees a reduction in the loss $\Delta L$, given by:
$
\Delta L 
= -\eta \langle g^R_{LoRA-XS}, g^R \rangle_F + o(\eta)
\le 0.
$
\begin{proof}
See Appendix \ref{app:proof_loss-neg}.
\end{proof}
\end{theorem}
\end{tcolorbox}

\subsection{Initialization using update approximation} \label{subsec:Init}


 In FT, the primary goal is to update weights to better suit the target task. The initial gradient steps are particularly informative, as they indicate the direction of desired adaptation. We leverage this insight by using the first update step from full FT for initialization.

This approach offers two key advantages. First, it ensures the low-rank space captures the most relevant subspace for the target task rather than relying on pre-trained properties. Second, since $A$ and $B$ are fixed, initializing them to span the subspace of early adaptation increases the likelihood of capturing useful updates throughout training. This also ensures that the final update is learnt in the correct subspace, of which we have no apriori information besides the first full FT step.
Our method is summarized as: set such initialization that best approximates the first step of full FT. Given a full FT update $\Delta W_{first-step}$, our initialization satisfies:
\begin{gather}
    s B_{init}R_{init}A_{init} \approx \Delta W_{first-step}
\end{gather}
 The first step of full FT, for Adam-based optimizers such as AdamW, for sample $x_i$ is: 
\begin{gather}
     \Delta W_{first-step} =  -\eta \times \textbf{sign}(\nabla_W \mathcal{L}(W_0,x_i))
\end{gather}
However, the usage of a single sample may lead to noisy estimates. Instead, we compute a more stable initialization by averaging gradients over a subset of the training data:
\begin{gather}
\Delta W_{avg} = -\eta \textbf{sign}(  \sum_{i=0}^{n \leq |\mathbb{X}| } \nabla_W \mathcal{L}(W_0,x_i)),\quad x_i \in \mathbb{X}
\end{gather}
Since AdamW is used as the optimizer for both full FT and LoRA-SB training, we approximate its first update step using the sign of the summed gradients rather than their raw values (see Appendix \ref{app:adam_sign} for details).
This better captures the direction of adaptation required for the target task while being less sensitive to individual sample variations. 
We then use truncated SVD to obtain a low-rank approximation of $\Delta W_{\text{avg}}$, and express it as $sBRA$. 
There exist infinite combinations of $B$ and $A$ which can obey this relationship. 
For instance, we can initialize $B$ and $A$ as $US$ and $V^\top $ and keep $R$ as $I/s$. 
This is equivalent to the $B$ and $A$ initialization in \xs but by approximating the update rather than the pre-trained matrix.
 The above process can be computed for any optimizer, by approximating the corresponding first step.
 We compute this specifically for AdamW since we use it.



\subsection{Scaling Factor independence} \label{subsec:Hyper}

The hyperparameter $\alpha$ is used in LoRA and other decomposition-based methods to tackle instability caused to improper scaling of the updates. The gradient scaling is accounted for, by adding a hyperparameter to normalize the updates. The importance of scaling is shown in methods like rank stabilization \cite{rslora}. However, the full FT gradient $g$ needs no such tuning. We claim that approximating the full FT gradient removes the need for introducing a scaling factor, as shown in Theorem \ref{theorem:hyper}. 

\begin{tcolorbox}[
    colback=cyan!10,
    colframe=black,
    boxsep=1pt,      
    left=6pt,        
    right=6pt,       
    top=6pt,         
    bottom=6pt,      
    before skip=6pt, 
    after skip=6pt   
]
\begin{theorem} \label{theorem:hyper}
The equivalent gradient $\Tilde{g}$ is hyperparameter $s$ independent for $\Tilde{g}=sBg^RA$, but not for
$  \Tilde{g}=sBg^R_{LoRA-XS}A$.
\end{theorem}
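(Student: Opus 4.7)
The plan is to reduce both claims to a single substitution each, using Lemma \ref{lemma:gR} and Theorem \ref{theorem:pro} as black boxes, and then read off the dependence on $s$ directly from the resulting closed forms. Throughout, I would carry over the standing hypotheses from Theorem \ref{theorem:pro}, namely that $B$ and $A$ are full rank so that $(B^TB)^{-1}$ and $(AA^T)^{-1}$ exist.

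For the first claim, I would start from the optimal $g^R = s^{-2}(B^TB)^{-1}g^R_{\text{LoRA-XS}}(AA^T)^{-1}$ supplied by Theorem \ref{theorem:pro} and substitute it into $\tilde g = sBg^RA$. One factor of $s$ cancels immediately, leaving $\tilde g = s^{-1}B(B^TB)^{-1}g^R_{\text{LoRA-XS}}(AA^T)^{-1}A$. Then I would apply Lemma \ref{lemma:gR}, which gives $g^R_{\text{LoRA-XS}} = sB^TgA^T$, to cancel the remaining $s^{-1}$. The end result is $\tilde g = B(B^TB)^{-1}B^TgA^T(AA^T)^{-1}A$, which is manifestly independent of $s$. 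As a sanity check, I would observe that the two bracketed factors are simply the orthogonal projectors onto $\mathrm{Col}(B)$ and $\mathrm{Row}(A^T)$, so $\tilde g$ is the best rank-constrained approximation of $g$ in those fixed subspaces; such a projection has no reason to depend on an arbitrary scalar.

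For the second claim, I would substitute Lemma \ref{lemma:gR} directly into $\tilde g = sBg^R_{\text{LoRA-XS}}A$ to obtain $\tilde g = s^2 BB^TgA^TA$. This expression scales as $s^2$ and is therefore not $s$-independent whenever $B^TgA^T \neq 0$. The overall argument is mechanical and I do not anticipate any real obstacle; the only subtlety is bookkeeping of the $s$-exponents. In the corrected case, Theorem \ref{theorem:pro} contributes $s^{-2}$, the outer $sB(\cdot)A$ contributes $s$, and Lemma \ref{lemma:gR} contributes $s$, so the net exponent is $-2+1+1=0$. In the naive case, Theorem \ref{theorem:pro} is not invoked, so only the outer $s$ and the $s$ from Lemma \ref{lemma:gR} survive, giving net exponent $2$. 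This cleanly separates the two cases and establishes the theorem.
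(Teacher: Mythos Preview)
Your proposal is correct and mirrors the paper's own proof essentially step for step: substitute Theorem~\ref{theorem:pro} and then Lemma~\ref{lemma:gR} into $\tilde g = sBg^RA$ to cancel all powers of $s$, and substitute only Lemma~\ref{lemma:gR} into $\tilde g = sBg^R_{\text{LoRA-XS}}A$ to expose the residual $s^2$. Your added remarks about the projector interpretation and the explicit exponent bookkeeping are nice touches but do not change the underlying argument.
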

\begin{proof}
   See Appendix \ref{app:proof_hyper}.
\end{proof}
\end{tcolorbox}

The scaling factor independence of the equivalent gradient eliminates the need for manual gradient scaling. Updates to $W$ depend solely on this gradient (modulo learning rate), making any additional scaling redundant. 
This can be understood by examining the relationship with the full FT gradient $g$. Since $g$ is naturally scaled for optimal weight updates, and our method approximates $g$ in a constrained subspace, the equivalent gradient inherits appropriate scaling automatically. This property is unique to our gradient approximation approach and does not hold for standard LoRA-XS.

\subsection{LoRA-SB: Update approximation initialization is a \textit{silver bullet} } \label{subsec:LoRA_SB}

The solutions discussed independently address the gradient approximation and initialization problems, while also providing scaling factor independence.
LoRA-SB, elegantly combines these solutions through a simple initialization strategy, derived from approximating the first full FT step:
\begin{gather}
    U,S,V^\top  \leftarrow \textbf{SVD}(\Delta W_{avg}) \label{W_svd}\\
    B_{init} \leftarrow  U[1:r],
    A_{init} \leftarrow  V[1:r],
    R_{init} \leftarrow  \dfrac{1}{s}S[1:r,1:r] \label{S_init}   
\end{gather}
By the Eckart-Young theorem \citep{eckart1936approximation, mirsky1960symmetric}, this gives the optimal rank-$r$ approximation of the full FT update. 
where $U$, $S$, $V$ are obtained from truncated SVD of the averaged first update $\Delta W_{\text{avg}}$. 
This initialization leads to several key advantages.

\textbf{Simplified Gradient Optimization.} 
Our initialization ensures $B_{\text{init}}$ and $A_{\text{init}}$ form orthonormal bases in $\mathbb{R}^m$ and $\mathbb{R}^n$ respectively, leading to $B^\top B = AA^\top  = I$. With fixed $B$ and $A$ matrices being orthonormal, the need for complex matrix inversions during training is eliminated, , as the optimal update step, derived in Equation \ref{theorem:pro}, simplifies to:
\begin{equation*}
g^R =  \dfrac{1}{s^2} (B^\top  B)^{-1} g^R_{LoRA-XS} (A A^\top )^{-1}=\frac{1}{s^2}g^R_{LoRA-XS} \label{eq:show}
\end{equation*}

\textbf{Optimal Update Approximation.}
Our initialization guarantees that the first update optimally approximates the full FT weight updates:
$
sB_{\text{init}}R_{\text{init}}A_{\text{init}}  \approx \Delta W_{avg}.
$
By the Eckart-Young theorem, this is the optimal rank-$r$ approximation of the initial full FT update. 

\textbf{Scaling Factor Independence.} 
As shown in Theorem \ref{theorem:hyper}, when gradient approximation is applied with orthonormal $B$ and $A$, the hyperparameter $s$ can be set to 1, resulting in guaranteed optimal gradient approximation at every step, without requiring any scaling factor:
\begin{equation}
\boxed{
g^R = g^R_{\text{LoRA-XS}}
}
\end{equation}
\textbf{Guaranteed Loss Reduction.}
Since $B$ is a tall orthonormal and $A$ a wide orthonormal matrix, they remain full rank throughout training. This ensures that $dL$ remains negative (Theorem \ref{theorem:loss-neg}), guaranteeing stable optimization and convergence.
\begin{gather}
\Delta(s B_{init} R_{init}A_{init}) \approx \gamma \Delta W \label{eq:heuristic}
\end{gather}
Another heuristic which might lead to a good initialization is setting $B$ and $A$, such that the first update also approximately matches the $\Delta W$ direction (Equation \ref{eq:heuristic}).
Thankfully, we don't have to choose between the two. 
For SGD, we prove that setting $B_{init}$ and $A_{init}$ using Equations \ref{W_svd}-\ref{S_init}, results in the first update of \xs to best approximate the direction of the full FT update (Theorem \ref{theorem:grad_init}). 

\begin{tcolorbox}[
    colback=cyan!10,
    colframe=black,
    boxsep=1pt,      
    left=6pt,        
    right=6pt,       
    top=6pt,         
    bottom=6pt,      
    before skip=6pt, 
    after skip=6pt   
]
\begin{theorem} \label{theorem:grad_init}
If $A_{init}$ and $B_{init}$ are initialized using \ours for the first step of SGD optimizer, then the update given by LoRA-SB, $\Delta(B_{init} R_{init}A_{init})$ , is the best low-rank approximation of full fine-tuning update,
$
 \Delta W 
$.
\end{theorem}
\begin{proof}
    See Appendix \ref{app:proof_grad_init}.
\end{proof}
\end{tcolorbox}

While Theorem \ref{theorem:grad_init} is stated for SGD, the result extends to other SGD-based optimizers such as AdamW.
In practice, we use AdamW and approximate the first update by taking the sign of the averaged gradients, consistent with AdamW’s first-step behavior. This produces an initialization whose SVD still yields the optimal rank-$r$ approximation of the simulated full FT update.

\textbf{Initialization Memory.}
To optimize GPU memory during \textbf{initialization}, we hook into the backward pass and compute the gradients layerwise, immediately discarding the computed gradients \citep{lv2024parameterfinetuninglargelanguage, lora_ga}. 
This ensures \( O(1) \) memory usage, independent of the number of layers, keeping GPU memory well within limits.
This guarantees that the memory required for LoRA-SB initialization never exceeds the memory needed for subsequent LoRA-SB fine-tuning, and that \textbf{the peak memory usage of the entire LoRA-SB algorithm never exceeds that of standard LoRA} and other baselines.

\textbf{LoRA-SB Advantages over LoRA.}
Many properties described above are not achievable with standard LoRA methods. Even if $B$ and $A$ are initialized as orthonormal in LoRA, subsequent updates do not preserve this property because $B$ and $A$ are trainable. This results in several challenges in using LoRA (even with optimal gradient approximation) compared to LoRA-SB:
\vspace{-0.5em}
\begin{itemize}[leftmargin=*,itemsep=0pt]
    \item Potential instability of $(B^\top  B)^{-1}$ and $(A A^\top )^{-1}$, not guaranteed to remain non-singular throughout. 
    \item Inability to ensure consistent loss reduction due to potential rank deficiency, $B$ and $A$ may not remain full-rank throughout training. 
    \item Necessity to fine-tune the scaling factor hyperparameter $\alpha$. 
    \item Repeated re-computation of $B^\top  B$ and $A A^\top $ is required at each optimizer step for accurate gradient approximation. 
\end{itemize}
\vspace{-0.5em}


\section{Experiments} \label{sec:exps}

We evaluate over $16$ different datasets on $3$ widely-used benchmarks, using models ranging from the 355 M RoBERTa-large model to the 9 B Gemma-2 model. Our setup spans both masked and autoregressive architectures, allowing us to comprehensively assess the effectiveness of LoRA-SB. 
Specifically, we fine-tune RoBERTa-large \citep{liu2019robertarobustlyoptimizedbert}, Llama-3.2 3B \citep{llama3}, Mistral-7B \citep{mistral7b}, and Gemma-2 9B \citep{gemma2}.
\textbf{We compute the update approximation using only $\mathbf{1/1000}$ ($0.1\%$) of each dataset's total size}. This ensures that the training time overhead is minimal and has a negligible effect on efficiency. 
Detailed hyperparameter and dataset details are given in Appendix \ref{app:exps} and \ref{app:datasets}, respectively.

\textbf{Baselines.}
We compare LoRA-SB against full FT, LoRA \citep{lora}, LoRA-XS \citep{LoRA-XS}, and several popular variants of LoRA - rsLoRA \citep{rslora}, PiSSA \citep{pissa}, DoRA \citep{Liu_Wang_Yin_Molchanov_Wang_Cheng_Chen_2024}, and LoRA-Pro \citep{LoRA-Pro}. 

\begin{table}[!h]
\centering
\caption{
    Comparison of FT methods on Mistral-7B and Gemma-2 9B across arithmetic benchmarks. \# Params denotes the number of trainable parameters. Best results among PEFT methods are in \textbf{bold}.
}
\setlength{\tabcolsep}{5.8pt}
\small
\begin{tabular}{l c|ccc|ccc}
  \toprule
  \multirow{2}{*}{\textbf{Method}} & \multirow{2}{*}{\textbf{Rank}} & \multicolumn{3}{c|}{\textbf{Mistral-7B}} & \multicolumn{3}{c}{\textbf{Gemma-2 9B}} \\
  \cmidrule{3-8}
  & & \textbf{\# Params} & \textbf{GSM8K} ($\uparrow$) & \textbf{MATH} ($\uparrow$) & \textbf{\# Params} & \textbf{GSM8K} ($\uparrow$) &\textbf{ MATH} ($\uparrow$) \\
  \midrule
  Full FT   & -  & $7.24$ B   & $63.87$ & $17.65$ & $9.24$ B   & $79.23$ & $38.02$ \\
  LoRA      & 32 & $83.88$ M & $61.94$ & $15.98$ & $108.04$ M & $76.19$ & $36.56$ \\
  rsLoRA    & 32 & $83.88$ M & $62.15$ & $16.24$ & $108.04$ M & $76.84$ & $36.88$ \\
  PiSSA     & 32 & $83.88$ M & $62.43$ & $16.52$ & $108.04$ M & $77.12$ & $37.04$ \\
  DoRA    & 32 & $85.26$ M & $62.65$ & $16.64$ & $109.88$ M & $77.58$ & $37.04$ \\
  LoRA-Pro     & 32 & $83.88$ M & $63.07$ & $17.32$ & $108.04$ M & $78.26$ & $37.53$ \\
  \cmidrule{1-8}
  LoRA-XS   & 32 & $0.23$ M  & $54.28$ & $13.36$ & $0.30$ M   & $74.07$ & $34.62$ \\
  LoRA-XS   & 64 & $0.92$ M  & $57.08$ & $15.62$ & $1.20$ M   & $75.02$ & $36.46$ \\
  LoRA-XS   & 96 & $2.06$ M  & $58.53$ & $16.42$ & $2.71$ M   & $75.21$ & $36.98$ \\
  \cmidrule{1-8}
  \cellcolor{cyan!10}LoRA-SB   & \cellcolor{cyan!10}32 & \cellcolor{cyan!10}$0.23$ M & \cellcolor{cyan!10}$58.91$ & \cellcolor{cyan!10}$15.28$ & \cellcolor{cyan!10}$0.30$ M & \cellcolor{cyan!10}$75.44$ & \cellcolor{cyan!10}$36.66$ \\
  \cellcolor{cyan!10}LoRA-SB   & \cellcolor{cyan!10}64 & \cellcolor{cyan!10}$0.92$ M & \cellcolor{cyan!10}$60.73$ & \cellcolor{cyan!10}$16.28$ & \cellcolor{cyan!10}$1.20$ M & \cellcolor{cyan!10}$76.65$ & \cellcolor{cyan!10}$37.14$ \\
  \cellcolor{cyan!10}LoRA-SB   & \cellcolor{cyan!10}96 & \cellcolor{cyan!10}$2.06$ M & \cellcolor{cyan!10}$\mathbf{63.38}$ & \cellcolor{cyan!10}$\mathbf{17.44}$ & \cellcolor{cyan!10}$2.71$ M & \cellcolor{cyan!10}$\mathbf{78.40}$ & \cellcolor{cyan!10}$\mathbf{37.70}$ \\
  \bottomrule
\end{tabular}
\label{tab:arithmetic}
\end{table}

\begin{table}[!h]
\centering
\caption{
    Comparison of FT methods on Llama-3.2 3B across eight commonsense reasoning datasets. \# Params denotes the number of trainable parameters. Best results among PEFT methods are in \textbf{bold}.}
\setlength{\tabcolsep}{2.5pt}
\small
\begin{tabular}{lcc|ccccccccc}
  \toprule
  \multirow{2}{*}{\textbf{Method}} & \multirow{2}{*}{\textbf{Rank}} & \multirow{2}{*}{\textbf{\# Params}} & \multicolumn{9}{c}{\textbf{Accuracy ($\uparrow$)}} \\
  \cmidrule{4-12}
  & & & \textbf{BoolQ} & \textbf{PIQA} & \textbf{SIQA} & \textbf{HellaS.} & \textbf{WinoG.} & \textbf{ARC-e} & \textbf{ARC-c} & \textbf{OBQA} & \textbf{Avg.} \\
  \midrule
  Full FT & - & $3.21$ B & $70.43$ & $85.64$ & $80.45$ & $91.92$ & $85.02$ & $88.52$ & $75.29$ & $81.88$ & $82.39$ \\
  LoRA & $32$ & $48.63$ M & $70.03$ & $85.20$ & $79.12$ & $90.71$ & $82.24$ & $86.91$ & $74.32$ & $\mathbf{81.87}$ & $81.30$ \\
  rsLoRA & $32$ & $48.63$ M & $69.81$ & $85.63$ & $78.92$ & $90.45$ & $82.02$ & $86.71$ & $74.18$ & $81.72$ & $81.11$ \\
  PiSSA & $32$ & $48.63$ M & $70.12$ & $85.42$ & $79.44$ & $90.88$ & $82.68$ & $87.23$ & $74.61$ & $81.79$ & $81.52$ \\
  
  DoRA  & $32$  & $49.40$ M  & $70.43$ & $85.63$& $79.68$ & $90.76$ & $82.90$ & $87.61$ & $74.87$ & $82.04$ & $81.74$ \\
  LoRA-Pro  & $32$ & $48.63$ M  & $\mathbf{71.28}$ & $\mathbf{85.81}$ & $79.35$ & $90.90$ & $83.42$ & $87.24$ & $\mathbf{75.32}$ & $81.74$ & $81.88$ \\
  \cmidrule{1-12}
  LoRA-XS & $32$ & $0.20$ M & $65.01$ & $82.87$ & $76.17$ & $87.32$ & $80.12$ & $84.78$ & $70.31$ & $75.71$ & $77.79$ \\
  LoRA-XS & $64$ & $0.80$ M & $66.53$ & $83.12$ & $77.98$ & $88.53$ & $81.76$ & $85.15$ & $72.04$ & $77.14$ & $79.03$ \\
  LoRA-XS & $96$ & $1.81$ M & $67.28$ & $83.35$ & $78.66$ & $88.99$ & $82.08$ & $85.18$ & $72.61$ & $78.88$ & $79.63$ \\
  \cmidrule{1-12}
  \cellcolor{cyan!10}LoRA-SB & \cellcolor{cyan!10}$32$& \cellcolor{cyan!10}$0.20$ M & \cellcolor{cyan!10}$66.33$ & \cellcolor{cyan!10}$84.06$ & \cellcolor{cyan!10}$78.91$ & \cellcolor{cyan!10}$89.04$ & \cellcolor{cyan!10}$81.37$ & \cellcolor{cyan!10}$86.62$ & \cellcolor{cyan!10}$72.44$ & \cellcolor{cyan!10}$76.97$ & \cellcolor{cyan!10}$79.47$ \\
  \cellcolor{cyan!10}LoRA-SB & \cellcolor{cyan!10}$64$ & \cellcolor{cyan!10}$0.80$ M & \cellcolor{cyan!10}$68.35$ & \cellcolor{cyan!10}$84.55$ & \cellcolor{cyan!10}$79.94$ & \cellcolor{cyan!10}$\mathbf{91.68}$ & \cellcolor{cyan!10}$83.03$ & \cellcolor{cyan!10}$87.84$ & \cellcolor{cyan!10}$74.83$ & \cellcolor{cyan!10}$80.12$ & \cellcolor{cyan!10}$81.29$ \\
  \cellcolor{cyan!10}LoRA-SB & \cellcolor{cyan!10}$96$ & \cellcolor{cyan!10}$1.81$ M & \cellcolor{cyan!10}$70.34$ & \cellcolor{cyan!10}$84.76$ & \cellcolor{cyan!10}$\mathbf{80.19}$ & \cellcolor{cyan!10}$91.62$ & \cellcolor{cyan!10}$\mathbf{84.61}$ & \cellcolor{cyan!10}$\mathbf{87.92}$ & \cellcolor{cyan!10}$74.74$ & \cellcolor{cyan!10}$81.20$ & \cellcolor{cyan!10}$\mathbf{81.92}$ \\
  \bottomrule
\end{tabular}
\label{tab:cr}
\end{table}

\begin{table}[!h]
\centering
\caption{
    Comparison of FT methods on RoBERTa-large across GLUE datasets. \# Params denotes the number of trainable parameters. Best results among PEFT methods are in \textbf{bold}. We use Pearson correlation for STS-B, Matthew's correlation for CoLA, and accuracy for others.
}
\setlength{\tabcolsep}{4pt}
\small
\begin{tabular}{lcc|ccccccc}
  \toprule
  \multirow{2}{*}{\textbf{Method}} & \multirow{2}{*}{\textbf{Rank}} & \multirow{2}{*}{\textbf{\# Params}} & \textbf{CoLA} & \textbf{RTE} & \textbf{MRPC} & \textbf{STS-B} & \textbf{QNLI} & \textbf{SST-2} & \textbf{All} \\
  & & & \textbf{Mcc} $\uparrow$ & \textbf{Acc} $\uparrow$ & \textbf{Acc} $\uparrow$ & \textbf{Corr} $\uparrow$ & \textbf{Acc} $\uparrow$ & \textbf{Acc} $\uparrow$ & \textbf{Avg.} $\uparrow$ \\
  \midrule
  Full FT & - & $355.36$ M & $68.44$ & $83.42$ & $90.21$ & $91.76$ & $93.92$ & $96.21$ & $87.33$ \\
  LoRA & $8$ & $2162.69$ K & $68.02$ & $82.98$ & $90.05$ & $91.43$ & $93.42$ & $95.98$ & $86.98$ \\
  rsLoRA & $8$ & $2162.69$ K & $67.87$ & $82.84$ & $89.97$ & $91.30$ & $93.29$ & $95.87$ & $86.85$ \\
  PiSSA & $8$ & $2162.69$ K & $68.22$ & $83.14$ & $90.10$ & $91.59$ & $93.55$ & $96.03$ & $87.10$ \\
  DoRA & $8$ & $2260.99$ K & $68.05$ & $83.04$ & $89.93$ & $91.34$ & $93.11$ & $95.82$ & $86.88$ \\
  LoRA-Pro & $8$ & $2162.69$ K & $67.98$ & $\mathbf{83.40}$ & $\mathbf{90.49}$ & $91.38$ & $93.37$ & $95.98$ & $87.10$ \\
  \cmidrule{1-10}
  LoRA-XS & $8$ & $6.14$ K & $61.07$ & $75.23$ & $86.21$ & $89.29$ & $92.44$ & $94.72$ & $83.16$ \\
  LoRA-XS & $16$ & $24.57$ K & $63.32$ & $79.06$ & $86.28$ & $90.36$ & $93.69$ & $95.76$ & $84.70$ \\
  LoRA-XS & $24$ & $55.20$ K & $66.27$ & $80.14$ & $88.48$ & $90.77$ & $93.21$ & $95.89$ & $85.79$ \\
  \cmidrule{1-10}
  \cellcolor{cyan!10}LoRA-SB & \cellcolor{cyan!10}$8$ & \cellcolor{cyan!10}$6.14$ K & \cellcolor{cyan!10}$63.57$ & \cellcolor{cyan!10}$78.43$ & \cellcolor{cyan!10}$88.72$ & \cellcolor{cyan!10}$90.59$ & \cellcolor{cyan!10}$92.95$ & \cellcolor{cyan!10}$95.07$ & \cellcolor{cyan!10}$84.88$ \\
  \cellcolor{cyan!10}LoRA-SB & \cellcolor{cyan!10}$16$ & \cellcolor{cyan!10}$24.57$ K & \cellcolor{cyan!10}$64.36$ & \cellcolor{cyan!10}$82.31$ & \cellcolor{cyan!10}$89.71$ & \cellcolor{cyan!10}$91.24$ & \cellcolor{cyan!10}$\mathbf{93.89}$ & \cellcolor{cyan!10}$95.87$ & \cellcolor{cyan!10}$86.23$ \\
  \cellcolor{cyan!10}LoRA-SB & \cellcolor{cyan!10}$24$ & \cellcolor{cyan!10}$55.20$ K & \cellcolor{cyan!10}$\mathbf{68.28}$ & \cellcolor{cyan!10}$83.03$ & \cellcolor{cyan!10}$90.12$ & \cellcolor{cyan!10}$\mathbf{91.65}$ & \cellcolor{cyan!10}$93.75$ & \cellcolor{cyan!10}$\mathbf{96.11}$ & \cellcolor{cyan!10}$\mathbf{87.16}$ \\
  \bottomrule
\end{tabular}
\label{tab:glue}
\end{table}

\subsection{Arithmetic Reasoning}

We fine-tune Mistral-7B \citep{mistral7b} and Gemma-2 9B \citep{gemma2} on 50K samples from MetaMathQA \citep{metamathqa} and evaluate on GSM8K \citep{gsm8k} and MATH \citep{math}. 
We apply LoRA modules to the key, value, query, attention output, and all fully connected weight matrices, training with ranks $r = \{32, 64, 96\}$. 
We present results in Table \ref{tab:arithmetic}. LoRA-SB significantly outperforms LoRA-XS across all settings. LoRA-SB outperforms LoRA-based methods ($r=32$) while using \textbf{40x} fewer trainable parameters for Mistral-7B and \textbf{90x} fewer for Gemma-2 9B at ranks $r=96$ and $r=64$, respectively. 
We present training loss curves comparing LoRA-SB and LoRA-XS in Figure \ref{fig:loss-arithmetic}. Thanks to superior initialization, LoRA-SB starts with a lower initial loss compared to LoRA-XS. Further, due to optimal gradient approximation, LoRA-SB maintains a consistently better loss throughout and converges to a superior final value.

\begin{figure*}[!h]
    \centering
    \subfloat[Mistral-7B]{
        \includegraphics[width=0.48\textwidth]{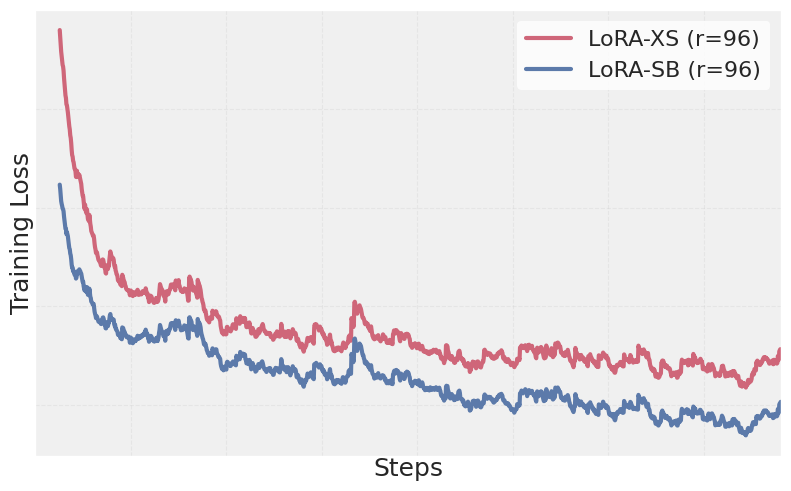}
        \label{fig:loss-mistral-96}
    }
    \hfill
    \subfloat[Gemma-2 9B]{
        \includegraphics[width=0.48\textwidth]{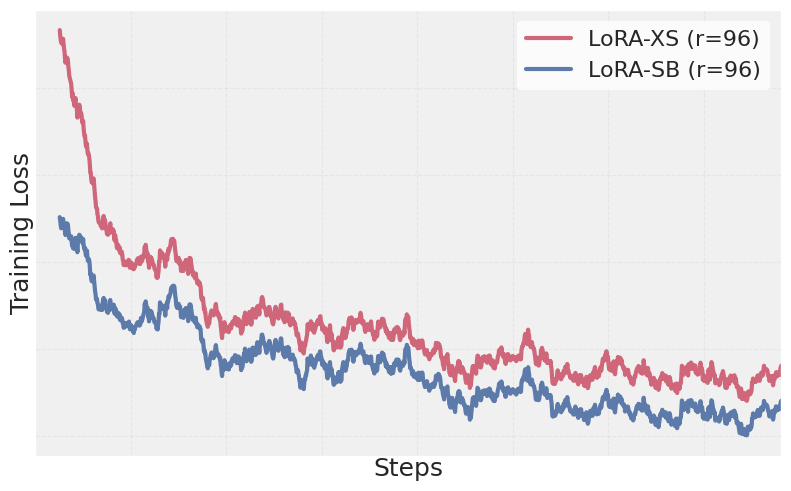}
        \label{fig:loss-gemma-96}
    }
    \caption{Training loss curves for Mistral-7B and Gemma-2 9B, comparing LoRA-SB and LoRA-XS.}
    \label{fig:loss-arithmetic}
\end{figure*}

\subsection{Commonsense Reasoning}
We fine-tune Llama-3.2 3B \citep{llama3} on \textsc{CommonSense170K}, a dataset with eight commonsense reasoning tasks \citep{cr-dataset}. 
LoRA modules are applied to the key, value, query, attention output, and all fully connected weight matrices, training with ranks $r = \{32, 64, 96\}$.
We present the results in Table \ref{tab:cr}. LoRA-SB consistently outperforms LoRA-XS across all settings. In addition, LoRA-SB ($r=96$) outperforms LoRA-based methods ($r=32$) with \textbf{27x} fewer trainable parameters.

\subsection{Natural Language Understanding}

We fine-tune RoBERTa-large \citep{liu2019robertarobustlyoptimizedbert} on GLUE, a popular language understanding benchmark. 
LoRA modules are applied only to the self-attention layers, with ranks $r = \{8, 16, 24\}$.
Results are shown in Table \ref{tab:glue}. LoRA-SB consistently outperforms LoRA-XS across all settings. Additionally, LoRA-SB ($r=24$) outperforms LoRA-based methods ($r=8$) with \textbf{39x} lesser trainable parameters.

\section{Analysis} \label{sec:analysis}

\textbf{Optimal Initialization is Important!}

To isolate the impact of initialization, we take truncated SVD on various matrices, including Kaiming initialization \cite{kaiming} and $\Delta W_{avg}$ with varying levels of Gaussian noise, as shown in Table \ref{tab:init-matters}. By applying truncated SVD, we ensure optimal gradient approximation, leading to initialization matrices $B_{\text{init}}$ and $A_{\text{init}}$ that form orthonormal bases in $\mathbb{R}^m$ and $\mathbb{R}^n$, respectively. This results in $B^T B = A A^T = I$, allowing us to isolate the effect of initialization. The results clearly demonstrate the significance of initialization, our approach consistently outperforms other variants.

\begin{table}[h]
 \centering
 \caption{
    Comparison of initialization strategies using Mistral-7B on GSM8K and MATH. All methods ensure optimal gradient approximation, with differences arising solely from the initialization. }
 \setlength{\tabcolsep}{8pt}
 \small
 \begin{tabular}{l|cc}
   \toprule
   \multirow{2}{*}{\textbf{Initialization Method}} & \multicolumn{2}{c}{\textbf{Accuracy ($\uparrow$)}} \\
   \cmidrule{2-3}
   & \textbf{GSM8K} & \textbf{MATH} \\
   \midrule
   trunc\_SVD (Kaiming) & $00.00$ & $00.00$ \\
   trunc\_SVD ($\Delta W_{avg} + \mathcal{N}_{\mu=10^{-2}}$) & $00.00$ & $00.00$ \\
   trunc\_SVD ($\Delta W_{avg} + \mathcal{N}_{\mu=10^{-3}}$) & $58.83$ & $14.76$ \\
   trunc\_SVD ($\Delta W_{avg} + \mathcal{N}_{\mu=10^{-4}}$) & $60.19$ & $15.96$ \\
   trunc\_SVD ($\Delta W_{avg} + \mathcal{N}_{\mu=10^{-5}}$) & $60.65$ & $15.98$ \\
   LoRA-SB; trunc\_SVD ($\Delta W_{avg}$) & $\mathbf{63.38}$ & $\mathbf{17.44}$ \\
   \cmidrule{1-3}
 \end{tabular}
 \label{tab:init-matters}
\end{table}

\textbf{Why Do We Use 0.1\% of the Dataset Size for Initialization?}

We selected the $0.1\%$ initialization dataset-size heuristic based on experiments that suggested it provides a good tradeoff between quality and efficiency. 
Specifically, we conducted ablations varying the number of samples used for initialization when fine-tuning Mistral-7B and Gemma-2 9B on 50k samples from MetaMathQA. 
The results (Table \ref{tab:init-samples}) show that once the sample count exceeds a modest threshold (25 samples or $0.05\%$), performance quickly plateaus, indicating that the learned subspace is already sufficiently representative. Using $0.1\%$ of the training data (50 samples) consistently exceeds this threshold across tasks and models, while incurring negligible training time overhead. 

\begin{table}[h]
 \centering
 \caption{
    Performance effect of number of samples used for initialization.}
 \setlength{\tabcolsep}{8pt}
 \small
 \begin{tabular}{l|cc|cc}
   \toprule
   \multirow{2}{*}{\textbf{\# Samples}} & \multicolumn{2}{c|}{\textbf{Mistral-7B}} & \multicolumn{2}{c}{\textbf{Gemma-2 9B}} \\
   \cmidrule{2-5}
   & \textbf{GSM8K ($\uparrow$)} & \textbf{MATH ($\uparrow$)} & \textbf{GSM8K ($\uparrow$)} & \textbf{MATH ($\uparrow$)} \\
   \midrule
   $1$   & $62.13$  & $15.55$ & $76.03$ & $35.77$ \\
   $5$   & $62.78$ & $16.86$ & $77.49$ & $37.24$ \\
   $25$  & $63.28$ & $17.30$ & $78.18$ & $37.70$ \\
   $50$  & $63.38$ & $17.44$ & $78.40$ & $37.70$ \\
   $100$ & $63.34$ & $17.25$ & $78.22$ & $37.45$ \\
   $200$ & $63.45$ & $17.36$ & $78.43$ & $37.87$ \\
   $500$ & $63.40$ & $17.52$ & $78.54$ & $37.63$ \\
   \cmidrule{1-5}
 \end{tabular}
 \label{tab:init-samples}
\end{table}

\textbf{Optimal Gradient Approximation is Important!}

We aim to examine the effect of optimal gradient approximation. Specifically, we want $B_{\text{init}} R_{\text{init}} A_{\text{init}} \approx \Delta W_{avg}$ without enforcing $B^T B = A A^T = I$. We achieve this through:
\begin{gather} 
    U, S, V^T \leftarrow \textbf{SVD}(\Delta W_{avg}) \label{W_svd_abl} \\ 
    B_{\text{init}} \leftarrow U[1:r] S[1:r, 1:r],
    A_{\text{init}} \leftarrow V[1:r],
    R_{\text{init}} \leftarrow I \label{S_init_abl}
\end{gather}
This ensures that $B_{\text{init}} R_{\text{init}} A_{\text{init}} \approx \Delta W_{avg}$, but only $AA^T = I$, while $B^T B \neq I$. The setup is suboptimal for gradient approximation since we do not explicity use the closed-form solution derived in Theorem \ref{theorem:pro}.
We compare the resulting loss curves against LoRA-SB (which uses optimal gradient approximation) for Mistral-7B, as shown in Figure \ref{fig:loss-ablation} in Appendix \ref{app:init}. Although both start similarly due to effective initialization, LoRA-SB converges to significantly better values, demonstrating the advantage of optimal gradient approximation. Furthermore, LoRA-SB achieves higher accuracies on GSM8K and MATH, with scores of $63.38$ and $17.44$ compared to $55.87$ and $12.74$, respectively.

\textbf{Training Time and Inference.} 

We provide detailed benchmarks of training time and inference performance in Appendix \ref{app:training-time} and \ref{app:inference}, respectively. 
As shown, the initialization step in LoRA-SB introduces only a negligible training-time overhead compared to LoRA ($\approx 1.1\%-1.3\%$).

\section{Conclusion}

In this work, we introduced LoRA-SB, which bridges the gap between low-rank PEFT and full FT. 
This is enabled by our initialization strategy, which approximates the first step of full FT and ensures that the most relevant subspaces for task-specific adaptation are captured. 
We achieve optimal gradient scaling and preserve update directions throughout training. 
Our approach ensures scaling factor independence by approximating the full FT gradient, thereby eliminating potential instability issues.
Through extensive experiments, we demonstrate that our method outperforms LoRA (and baselines) using upto \textbf{90x} less parameters, and comprehensively outperforms LoRA-XS. 


\section{Acknowledgements}
This research was supported by funding from Mohamed bin Zayed University of Artificial Intelligence (MBZUAI) and ADIA Lab.
\bibliography{example_paper}
\bibliographystyle{plain}

\clearpage
\appendix

\part*{Appendix}
\addcontentsline{toc}{part}{Appendix} 

\etocsettocdepth{subsection}
\localtableofcontents


\section{Related Work} \label{app:related-work}

\textbf{Parameter-Efficient Fine-Tuning (PEFT).}
PEFT methods have become essential for adapting large pre-trained models under computational constraints. Early techniques like AdapterFusion \cite{Pfeiffer_Kamath_Rücklé_Cho_Gurevych_2021} and Prefix-Tuning \cite{Li_Liang_2021} enabled task-specific adaptation with minimal parameter updates. Advances like soft prompts \cite{Lester_Al-Rfou_Constant_2021} further reduced trainable parameter counts while maintaining strong performance. Recent approaches have explored operating directly on model representations \cite{Wu_Arora_Wang_Geiger_Jurafsky_Manning_Potts_2024}.

\textbf{Low-Rank Decomposition Methods.}
LoRA \cite{lora} demonstrated that weight updates during FT could be efficiently approximated using low-rank matrices, drastically reducing parameter counts. Building on this insight, variants such as QLoRA \cite{Dettmers_Pagnoni_Holtzman_Zettlemoyer_2023} and AdaLoRA \cite{Zhang_Chen_Bukharin_Karampatziakis_He_Cheng_Chen_Zhao_2023} extended the paradigm through quantization and adaptive allocation strategies. The applicability of low-rank techniques has also been explored in pretraining with GaLore \cite{Zhao_Zhang_Chen_Wang_Anandkumar_Tian_2024} and ReLoRA \cite{Lialin_Shivagunde_Muckatira_Rumshisky_2023}, highlighting the versatility of low-rank adaptation methods. 
LoRA-based methods have also been applied in other domains, such as efficient federated FT \citep{sun2024improvingloraprivacypreservingfederated, singhal2024exact}.

\textbf{Enhancing LoRA Performance.}
Recent efforts have focused on optimizing LoRA's performance. PiSSA \cite{pissa} demonstrated improvements by initializing matrices with principal components of pre-trained weights. LoRA-Pro \cite{LoRA-Pro} and LoRA-GA \cite{lora_ga} improved gradient approximation, aligning low-rank updates more closely with full FT. Methods like DoRA \cite{Liu_Wang_Yin_Molchanov_Wang_Cheng_Chen_2024} and rsLoRA \cite{rslora} introduced decomposition-based and scaling stabilization techniques to enhance learning stability and expand LoRA's utility.

\textbf{Improving Efficiency in LoRA Variants.}
Efficiency-focused innovations have pushed LoRA toward more parameter savings. LoRA-XS \citep{LoRA-XS} achieves this by inserting a small trainable weight matrix into frozen low-rank matrices. VeRA \cite{Kopiczko_Blankevoort_Asano_2024} shares low-rank matrices across layers, relying on scaling vectors for task-specific adaptation. Tied-LoRA \cite{Renduchintala_Konuk_Kuchaiev_2024} leverages weight tying to reduce parameter usage at higher ranks, while HydraLoRA \cite{Tian_Shi_Guo_Li_Xu_2024} introduces an asymmetric architecture for improvement.

\section{Proofs}\label{app:proofs}
In all the proofs below, we will use the notations defined in Section \ref{sec:Methodologies}.
\subsection{Proof of Lemma \ref{lemma:subspace}} \label{app:proof_lemma_subspace}

\begin{tcolorbox}[colback=cyan!10,colframe=black]
\begin{lemma*}\label{lemma:Subspace}
Let $\Delta W$ be an update learned with LoRA-XS. Then, the set of all possible $\Delta W$, say $ \mathcal{  W}_{LoRA-XS}$, is given as:
$$
\mathcal{  W}_{LoRA-XS} =\{M \in \mathbb{R}^{m \times n}|
\text{Col}(M) \subseteq \text{Col}(B) \wedge \text{Row}(M) \subseteq  \text{Row}(A)\},
$$
where $\text{Col}(M)$ and $\text{Row}(M)$ are column and row spaces of matrix $M$ respectively.
\end{lemma*}
\end{tcolorbox}

\begin{proof}
Since $\Delta W = BRA$, we have
\begin{align*}
    \text{Col}(\Delta W) = \{y \in \mathbb{R}^m \mid y = BRAx, \, x \in \mathbb{R}^n\} \implies \\
    \text{Col}(\Delta W) = \{y \in \mathbb{R}^m \mid y = Bz, \, z \in \text{Col}(RA)\} \subseteq \text{Col}(B).
\end{align*}

That is, we proved that
\begin{align}
    \text{Col}(\Delta W) &\subseteq \text{Col}(B).
\end{align}
Following similar arguments, one can also show $\text{Row}(\Delta W) \subseteq \text{Row}(A)$.
\end{proof}

\subsection{Proof of Lemma \ref{lemma:gR}}\label{app:proof_lemma_gr}

\begin{tcolorbox}[colback=cyan!10,colframe=black]
\begin{lemma*}
The gradient of the loss with respect to matrix $R$ can be expressed in terms of the gradient with respect to the weight matrix $W$ as:
\begin{gather*}
g^R_{LoRA-XS} = s B^\top  g A^\top.
\end{gather*}
\end{lemma*}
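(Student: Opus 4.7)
The plan is to derive the formula by a direct application of the multivariable chain rule to the LoRA-XS parameterization $W = W_0 + s B R A$, treating $B$, $A$, and $W_0$ as fixed. Since the only trainable quantity entering $W$ is $R$, the loss $L$ depends on $R$ only through $W$, so we can apply the chain rule $\partial L / \partial R_{ij} = \sum_{k,l} (\partial L / \partial W_{kl}) \cdot (\partial W_{kl} / \partial R_{ij})$.

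First I would expand $W_{kl} = (W_0)_{kl} + s \sum_{p,q} B_{kp} R_{pq} A_{ql}$ entrywise, from which $\partial W_{kl} / \partial R_{ij} = s\, B_{ki} A_{jl}$ follows immediately. Substituting into the chain rule sum gives
\begin{equation*}
(g^R_{LoRA-XS})_{ij} \;=\; \sum_{k,l} g_{kl} \cdot s\, B_{ki} A_{jl} \;=\; s \sum_{k} B_{ki} \sum_{l} g_{kl} A_{jl},
\end{equation*}
which is precisely the $(i,j)$-entry of $s\, B^\top g A^\top$. Reassembling the indices yields the claimed identity $g^R_{LoRA-XS} = s B^\top g A^\top$.

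There is no real obstacle here: the result is a one-line consequence of the chain rule together with the standard matrix-calculus identity $\partial \,\mathrm{tr}(B R A X) / \partial R = B^\top X^\top A^\top$ applied to a linear perturbation of $W$. If one prefers a coordinate-free derivation, I would instead take a differential: $\mathrm{d} W = s B (\mathrm{d} R) A$, so $\mathrm{d} L = \langle g, \mathrm{d} W \rangle_F = \langle g, s B (\mathrm{d} R) A \rangle_F = \langle s B^\top g A^\top, \mathrm{d} R \rangle_F$, and reading off the gradient with respect to $R$ under the Frobenius inner product gives the same conclusion. The only subtlety worth mentioning is confirming the transpose placement, which the differential form makes unambiguous via the cyclic property of the trace.
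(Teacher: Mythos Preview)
Your proposal is correct and follows essentially the same approach as the paper: both apply the chain rule to the linear parameterization $W = W_0 + sBRA$. The paper chooses to factor through an intermediate variable $X = RA$ and apply the chain rule in two stages, whereas your index computation and your differential argument $\mathrm{d}L = \langle g, sB(\mathrm{d}R)A\rangle_F = \langle sB^\top g A^\top, \mathrm{d}R\rangle_F$ are more direct, but the substance is identical.
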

\end{tcolorbox}

\begin{proof}
    Let $L$ be the loss function. We have already defined $g$ and $g^R_{\text{LoRA-XS}}$ as:
    \begin{align}
        g := \frac{\partial L}{\partial W}  \quad \& \quad g^R_{\text{LoRA-XS}}:= \frac{\partial L}{\partial R}. 
    \end{align}
    The chain rule gives
    \begin{align}
        \frac{\partial L}{\partial R} = \frac{\partial L}{\partial W} \frac{\partial W}{\partial R} \implies
        \frac{\partial L}{\partial R} = \frac{\partial L}{\partial W} \frac{\partial W}{\partial X} \frac{\partial X}{\partial R} \quad \text{ for } X=RA
    \end{align}
    
    We know that for $W = sBX$:
    \begin{align}
        \frac{\partial L}{\partial W}\frac{\partial W}{\partial X} = sB^\top g 
        \implies \frac{\partial L}{\partial R} = sB^\top g\frac{\partial X}{\partial R}
    \end{align}
    
    Let $sB^\top g = y$. We know that when $X = RA$:
    \begin{align}
        y\frac{\partial X}{\partial R} = yA^\top 
        \implies \frac{\partial L}{\partial R} = yA^\top  = sB^\top gA^\top  \
    \end{align}
\begin{align}
       \text{Therefore, \quad}  \boxed{g^R_{\text{LoRA-XS}} = sB^\top gA^\top }
\end{align}
\end{proof}

\subsection{Proof of Theorem \ref{theorem:pro}}\label{app:proof_pro}

\begin{tcolorbox}[colback=cyan!10,colframe=black]
\begin{theorem*}
For full-rank $A$ and $B$ matrices, the optimal solution for the objective  \\$\text{min}_{g^R} ||\tilde{g} - g||^2_F$, such that $\Tilde{g} = sB g^R A$, is:
$
g^R = \dfrac{1}{s^2} (B^\top  B)^{-1} g^R_{LoRA-XS} (A A^\top )^{-1}
$.
\end{theorem*}
\end{tcolorbox}

\begin{proof}
Since we already defined the equivalent gradient $\Tilde{g} := sB g^R A$, the minimization problem can be denoted as:
\begin{align}
    \argmin_{g^R} F &= \|sB g^R A - g\|_F^2 
\end{align}

For differentiable $F$,
\begin{align}
    \frac{\partial F}{\partial g^R} &= 0 \implies
    2(\tilde{g} - g) \cdot \frac{\partial \tilde{g}}{\partial g^R} = 0 \implies
    2(sBg^RA - g) \cdot \frac{\partial (sBg^RA)}{\partial g^R} = 0
\end{align}

Using the same trick from before and substituting $g^RA = X$, we get:
\begin{align}
    2sB^\top (sBg^RA - g)A^\top  = 0 \implies
     B^\top (sBg^RA - g)A^\top  = 0 \implies
    B^\top sBg^RAA^\top  = B^\top gA^\top 
\end{align}

From Lemma \ref{lemma:gR}, we get:
\begin{align}
    B^\top gA^\top  = g^R_{\text{LoRA-XS}}/s \implies
   B^\top sBg^RAA^\top  = g^R_{\text{LoRA-XS}}/s \implies
    B^\top Bg^RAA^\top  = g^R_{\text{LoRA-XS}}/s^2
\end{align}

Now since $B$ and $A$ are full rank, multiplying both sides by $(B^\top B)^{-1}$ and $(AA^\top )^{-1}$ on the left and right side respectively gives:
\begin{align}
    (B^\top B)^{-1}(B^\top Bg^RAA^\top )(AA^\top )^{-1} &= (B^\top B)^{-1}g^R_{\text{LoRA-XS}}(AA^\top )^{-1}/s^2
\end{align}

\begin{equation}
    \text{Therefore,} \quad \boxed{g^R = \dfrac{1}{s^2} (B^\top  B)^{-1} g^R_{\text{LoRA-XS}} (A A^\top )^{-1}}
\end{equation}
\end{proof}

\subsection{Proof of Theorem \ref{theorem:loss-neg}}
\label{app:proof_loss-neg}

\begin{tcolorbox}[colback=cyan!10,colframe=black]
\begin{theorem*}
Consider the update for matrix $R$ using the solution derived in Theorem \ref{theorem:pro}:
\begin{gather*}
R\leftarrow R - \eta g^R
\end{gather*}
where $\eta > 0$ is the (sufficiently small) learning rate.
This update guarantees a reduction in the loss $\Delta L$, given by:
\begin{gather*}
\Delta L \coloneqq L(W_0 + sB(R-\eta g^R)A) - L(W_0 + sBRA) 
= -\eta \langle g^R_{LoRA-XS}, g^R \rangle_F + o(\eta)
\le 0.
\end{gather*}
\end{theorem*}
\end{tcolorbox}

\begin{proof}
Assuming that $L$ is differentiable, we use Taylor's theorem and get
\begin{align}
    \Delta L &\coloneqq L(W_0 + sB(R-\eta g^R)A) - L(W_0 + sBRA)  \notag\\
    &= \left\langle \frac{\partial L}{\partial R}, -\eta g^R \right\rangle_F + o(\eta) \notag\\
    &= -\frac{\eta}{s^2} \langle g^R_{\text{LoRA-XS}}, (B^\top B)^{-1}g^R_{\text{LoRA-XS}}(AA^\top )^{-1} \rangle_F + o(\eta), \label{eq:loss_differential}
\end{align}

where in the last step we also used the definition of $g^R_{\text{LoRA-XS}}$ and the result of Theorem~\ref{theorem:pro}. To prove $\Delta L \leq 0$ for small enough $\eta$, it is sufficient to show that

\begin{equation}
    \langle g^R_{\text{LoRA-XS}}, (B^\top B)^{-1}g^R_{\text{LoRA-XS}}(AA^\top )^{-1} \rangle_F \geq 0.
\end{equation}
Next, we note that matrices $B^\top B \in \mathbb{R}^{r\times r}$ and $A A^\top \in \mathbb{R}^{r\times r}$ are positive definite since they are positive semi-definite and matrices $B$ and $A$ are full-rank (i.e., with rank $r$) matrices, which means that $B^\top B$ and $AA^\top$ have non-zero eigenvalues. Therefore, $(B^\top B)^{-1}$ and $(AA^\top)^{-1}$ are also positive definite, implying that there exist matrices $X$ and $Y$ such that $(B^\top B)^{-1} = YY^\top$ and $(AA^\top)^{-1} = XX^\top$ (e.g., one can find such matrices using Cholesky decomposition). Then, we have

\begin{equation*}
\begin{aligned}
    \langle g^R_{\text{LoRA-XS}}, (B^\top B)^{-1}g^R_{\text{LoRA-XS}}(AA^\top )^{-1} \rangle_F
    & = \langle g^R_{\text{LoRA-XS}}, YY^\top g^R_{\text{LoRA-XS}}XX^\top  \rangle_F \\
    & = \langle Y^\top g^R_{\text{LoRA-XS}}X, Y^\top g^R_{\text{LoRA-XS}}X \rangle_F \\
    & = \|Y^\top g^R_{\text{LoRA-XS}}X\|_F^2 \geq 0.
\end{aligned}
\end{equation*}
This concludes the proof.
\end{proof}

For our specific initialization where $(B^\top B) = I$, $(AA^\top ) = I$, and $s=1$, the result simplifies to:
\begin{equation}
\begin{aligned}
    \Delta L & = -\eta \langle g^R_{\text{LoRA-XS}}, g^R_{\text{LoRA-XS}} \rangle_F  + o(\eta) \leq 0.
\end{aligned}
\end{equation}

\subsection{Proof of Theorem \ref{theorem:hyper}} \label{app:proof_hyper}
\begin{tcolorbox}[colback=cyan!10,colframe=black]
\begin{theorem*}
The equivalent gradient $\Tilde{g}$ is  hyperparameter $s$ independent when 
\begin{gather*}
    \Tilde{g}=sBg^RA \quad
\text{ but not when }\quad 
    \Tilde{g}=sBg^R_{LoRA-XS}A.
\end{gather*}
\end{theorem*}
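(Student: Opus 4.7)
The plan is to substitute the known closed-form expressions from Lemma~\ref{lemma:gR} and Theorem~\ref{theorem:pro} into both candidate forms for $\Tilde{g}$ and track exactly how the powers of $s$ accumulate. The key observation driving the result is that the optimal $g^R$ derived in Theorem~\ref{theorem:pro} carries a $1/s^2$ prefactor, which is precisely the magnitude needed to absorb both the outer $s$ in $\Tilde{g} = sBg^RA$ and the $s$ already hidden inside $g^R_{LoRA-XS}$.

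First I would handle the gradient-approximation form. Starting from $\Tilde{g} = sBg^RA$ and substituting the closed form $g^R = \tfrac{1}{s^2}(B^TB)^{-1}g^R_{LoRA-XS}(AA^T)^{-1}$, a factor of $1/s$ survives in front. Next, applying Lemma~\ref{lemma:gR} to rewrite $g^R_{LoRA-XS} = sB^TgA^T$ contributes an additional $s$, and the two factors cancel. This leaves the manifestly $s$-free expression
\begin{equation*}
\Tilde{g} = B(B^TB)^{-1}B^T\, g\, A^T(AA^T)^{-1}A,
\end{equation*}
which is the oblique projection of $g$ onto $\mathrm{Col}(B)$ on the left and $\mathrm{Row}(A)$ on the right. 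This establishes the first claim.

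Second I would handle the naive LoRA-XS form $\Tilde{g} = sBg^R_{LoRA-XS}A$. Substituting Lemma~\ref{lemma:gR} directly yields $\Tilde{g} = s^2 BB^T g A^T A$, an expression that retains an explicit $s^2$ scaling with no remaining cancellation available. To rigorously conclude non-independence I would simply exhibit that two distinct values of $s$ produce two distinct values of $\Tilde{g}$ whenever $BB^TgA^TA \neq 0$, which is generic under the full-rank assumption on $B$ and $A$.

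The exercise is essentially bookkeeping, so there is no substantive analytic obstacle; the only subtlety is ensuring the substitutions are carried out in the correct order so that the cancellation of $s$ in the first case is visible and unambiguous. The conceptual content — which I would briefly highlight — is that Theorem~\ref{theorem:pro} was constructed precisely so that its $1/s^2$ factor neutralizes the two $s$'s appearing in the composition $sB(\cdot)A$ with $g^R_{LoRA-XS} = sB^TgA^T$, which is the fundamental reason LoRA-SB does not require tuning a scaling hyperparameter.
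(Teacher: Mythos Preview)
Your proposal is correct and follows essentially the same approach as the paper: substitute Theorem~\ref{theorem:pro} and Lemma~\ref{lemma:gR} into each form of $\Tilde{g}$, observe that the $1/s^2$ prefactor in $g^R$ exactly cancels the two $s$ factors to yield $B(B^TB)^{-1}B^TgA^T(AA^T)^{-1}A$, while the naive form retains an uncancelled $s^2$. Your added remarks on the projection interpretation and the non-degeneracy caveat $BB^TgA^TA \neq 0$ are nice touches but not substantive departures from the paper's argument.
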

\end{tcolorbox}
\begin{proof}

Let $g$ be the full fine-tuning gradient. We want to prove that $\tilde{g}$ does not depend on $s$, so we try to express it in terms of $g$ which does not depend on the \xs training process or reparameterization. 

1) For $\tilde{g} = sBg^RA$:
\begin{align}
    g^R = \frac{1}{s^2} (B^\top B)^{-1} g^R_{\text{LoRA-XS}} (AA^\top )^{-1} 
    \implies \tilde{g} = \frac{s}{s^2} B(B^\top B^{-1})g^R_{\text{LoRA-XS}}(AA^\top )^{-1}A
\end{align}

Now since $g^R_{\text{LoRA-XS}} = sB^\top gA^\top $:
\begin{align}
    \tilde{g} = \frac{1}{s} B(B^\top B^{-1})sB^\top gA^\top (AA^\top )^{-1}A 
    = B(B^\top B^{-1})B^\top gA^\top (AA^\top )^{-1}A.
\end{align}
which is $s$-independent.

2) For $\tilde{g} = sB g^R_{\text{LoRA-XS}}A$\\
\begin{align}   
  g^R_{\text{LoRA-XS}} = sB^\top gA^\top  \implies \tilde{g} = sB(sB^\top gA^\top )A 
    \implies \tilde{g} = s^2BB^\top gA^\top A
\end{align}
which is not $s$-independent.
\end{proof}

\subsection{Proof of Theorem \ref{theorem:grad_init}}\label{app:proof_grad_init}
\begin{tcolorbox}[colback=cyan!10,colframe=black]
\begin{theorem*} 
If $A_{init}$ and $B_{init}$ are initialized using \ours for the first step of SGD optimizer, then the update given by LoRA-SB, $\Delta(B_{init} R_{init}A_{init})$ , is the best low-rank approximation of full fine-tuning update,
$
 \Delta W.
$
\end{theorem*}
\end{tcolorbox}
\begin{proof}
Consider a gradient descent step with learning rate $\eta$ and updates for $R$:
\begin{align}
    \Delta R = -\eta \nabla_R \mathcal{L}(R) \implies
    B\Delta RA = -\eta B \nabla_R \mathcal{L}(R)A.
\end{align}
To measure its approximation quality of update of the weights in full finetuning:
\begin{align}
    \Delta W &= -\eta \nabla_W \mathcal{L}(W_0).
\end{align}
We use Frobenius norm of the difference between these two updates as a criterion:
\begin{align}
    \|B\Delta R A - \eta \nabla \mathcal{L}_W(W_0)\|_F
    &= \eta \|B\nabla_R \mathcal{L}(R)A - \nabla \mathcal{L}_W(W_0)\|_F.
\end{align}
We have shown before that:
\begin{align}
    \nabla_R \mathcal{L} &= B^\top  \nabla_W\mathcal{L} A^\top.
\end{align}
The problem now becomes:
\begin{align}
    &\min_{A_{\text{init}}, B_{\text{init}}} \|B^\top (B^\top  \nabla_W\mathcal{L} A^\top )A - \nabla_W\mathcal{L}\|_F 
    \quad \text{ where } \nabla_W\mathcal{L} = USV^\top. 
\end{align}
Using our initialization, we get:
\begin{align}
    \|BB^\top \nabla_W\mathcal{L}A^\top A - \nabla_W\mathcal{L}\|_F
    &= \|U_{IR}U_{IR}^\top USV^\top V_{IR}V_{IR}^\top  - USV^\top \|_F.
\end{align}
Moreover, we also have
\begin{align}
    U_{IR}U_{IR}^\top USV^\top V_{IR}V_{IR}^\top  &= \sum_{i=1}^r \sigma_i u_iv_i^\top.
\end{align}
The rank of $W'$ such that
\begin{align}
    W' &= U_{IR}U_{IR}^\top USV^\top V_{IR}V_{IR}^\top 
\end{align}
is $\leq r$, since the corresponding ranks of $B_{\text{init}}$ and $A_{\text{init}}$ is $r$.
Using the Eckart-Young Theorem, we find the optimal low-rank solution as:
\begin{align}
    W'^* &= \underset{\text{rank}(W')=r}{\text{arg min}} \|W' - \nabla_W\mathcal{L}\|_F = \sum_{i=1}^r \sigma_i u_iv_i^\top.
\end{align}
Since we also get an identical expression, our solution is optimal.
\end{proof}

\section{Simulating the First Step of Full Fine-Tuning Under AdamW}
\label{app:adam_sign}

Our initialization is designed to approximate the first update step that would occur during full fine-tuning using the AdamW optimizer, which is also used in LoRA-SB training. AdamW computes the parameter update using both first and second moment estimates of the gradient. At the first step, these moments are initialized to zero, so the update becomes:
\[
\theta_1 = \theta_0 - \alpha \cdot \frac{g_1}{\sqrt{g_1^2 + \epsilon}} \approx -\alpha \cdot \text{sign}(g_1)
\]
where \( g_1 \) is the gradient at the first step, \( \epsilon \) is a small constant for numerical stability, and \( \alpha \) is the learning rate. Due to zero-initialization and bias correction, the direction of the update is approximately the element-wise sign of the gradient.

To simulate this behavior in our low-rank initialization, we use:
\[
\Delta W_{\text{avg}} = -\eta \cdot \text{sign}\left( \sum_{i=1}^n \nabla_W \mathcal{L}(W_0, x_i) \right)
\]
This reflects the direction of the first AdamW step averaged over a mini-batch. By using the sign of the gradient sum, we ensure our initialization aligns with the dynamics of AdamW, leading to a consistent and faithful approximation of full fine-tuning updates within the low-rank subspace.

\section{Algorithm} \label{app:algo}

We provide a pseudo-code implementation of our method in Algorithm \ref{alg:lora-sb}.

\begin{center}
\begin{minipage}{0.6\textwidth}
  \begin{algorithm}[H]
    \caption{\textcolor{MidnightBlue}{\texttt{\textbf{ LoRA‑SB, PyTorch‑like}}}}
    \label{alg:lora-sb}
    \begin{algorithmic}[1]
      \STATE \texttt{\textcolor{BrickRed}{\textbf{def}}}%
             \ \texttt{\textcolor{MidnightBlue}{\textbf{initSB}}\textbf{(model, D)}}
      \STATE \hspace{1.5em}\texttt{\textcolor{OliveGreen}{\# \textbf{Estimate gradient with n samples}}}
      \STATE \hspace{1.5em}\texttt{$\Delta W_{\mathrm{avg}} \leftarrow$%
             \ \textcolor{Plum}{\textbf{est\_grad}}(model, D, n)}
      \STATE \hspace{1.5em}\texttt{\textcolor{OliveGreen}{\# \textbf{Initialize B, R, A}}}
      \STATE \hspace{1.5em}\texttt{$(B,R,A) \leftarrow$%
             \ \textcolor{Plum}{\textbf{trunc\_SVD}}($\Delta W_{\mathrm{avg}}$)}
      \STATE \hspace{1.5em}\texttt{\textcolor{OliveGreen}{\# \textbf{Convert to LoRA‑SB model}}}
      \STATE \hspace{1.5em}\texttt{sb\_model} $\leftarrow$%
        \ \textcolor{Plum}{\textbf{\texttt{lora\_SB}}(model, B, R, A)}
      \STATE \hspace{1.5em} \texttt{\textbf{return}\,sb\_model}

      \STATE
      \STATE \texttt{\textcolor{OliveGreen}{\# \textbf{Load pre‑trained model}}}
      \STATE\texttt{ model $\leftarrow$ \textbf{\textcolor{Plum}{AutoModel}}(base\_model)}
      \STATE \texttt{\textcolor{OliveGreen}{\# \textbf{Initialize LoRA‑SB with D}}}
      \STATE \texttt{sb\_model $\leftarrow$ \textbf{\textcolor{MidnightBlue}{initSB}}(model, D)}
      \STATE \texttt{\textcolor{OliveGreen}{\# \textbf{Train, only R trainable}}}
      \STATE \texttt{trainer $\leftarrow$ \textbf{\textcolor{Plum}{Trainer}}(sb\_model,…)}
      \STATE \texttt{trainer.train()}
    \end{algorithmic}
  \end{algorithm}
\end{minipage}
\end{center}

\section{Optimal Gradient Approximation is Important!} \label{app:init}
As discussed in Section \ref{sec:analysis}, optimal gradient approximation plays a key role in the effectiveness of LoRA-SB. In Figure \ref{fig:loss-ablation}, we compare the loss curves of models trained with and without this component on Mistral-7B. While both variants begin with similar performance due to effective initialization, LoRA-SB with optimal gradient approximation converges to substantially lower loss values, highlighting its contribution to improved optimization.

\begin{figure}[h]
    \centering
    \includegraphics[width=0.6\linewidth]{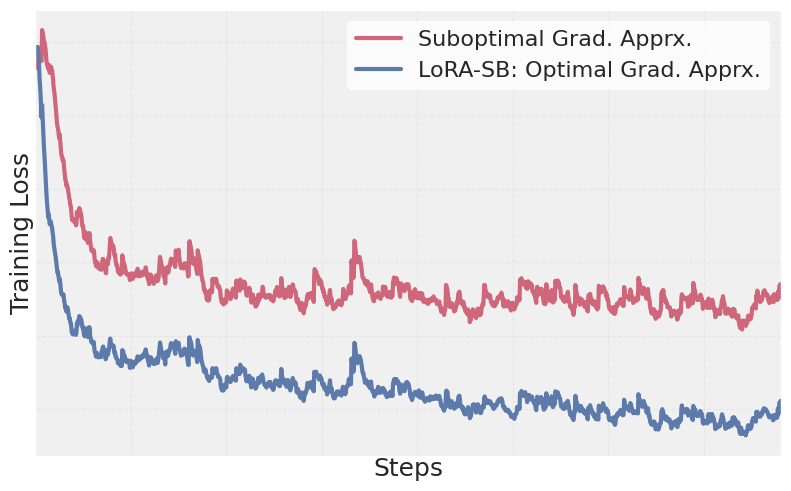}
    \caption{Training loss for Mistral-7B, highlighting the impact of optimal gradient approximation.}
    \label{fig:loss-ablation}
\end{figure}

\section{Training Time Overhead vs LoRA-XS} \label{app:training-time}

As previously mentioned, we compute the update approximation using only $1/1000$ of the total training samples for each dataset. Table \ref{tab:training_time} presents the associated training time overhead for these computations, compared to LoRA-XS. The results show that the \textbf{additional overhead is negligible}, adding just $2–4$ minutes compared to the total training time of $3–5$ hours per epoch ($\approx 1.1\%$ to $1.3\%$). Additionally, the update computation is performed only once, at the beginning of the first epoch, prior to training.
Notably, the initialization step is highly efficient, as we directly compute the \textbf{truncated SVD} using optimized PyTorch libraries (\textbf{\texttt{torch.svd\_lowrank}}). For reference, this \textbf{computation takes less than one second for each of the entire LLMs} used in our experiments.

\begin{table}[h]
\centering
\caption{
  Training time overhead due to the initialization for various models on their respective tasks.
}
\setlength{\tabcolsep}{6pt}
\small
\begin{tabular}{l|cc}
  \toprule
  \textbf{Model} & \textbf{Overhead} & \textbf{Training Time/Epoch} \\
  \midrule
  Mistral-7B & 0:02:01 & 3:03:57 \\
  Gemma-2 9B & 0:03:46 & 4:13:24 \\
  Llama-3.2 3B & 0:03:54 & 4:54:31 \\
  \bottomrule
\end{tabular}
\label{tab:training_time}
\end{table}

\section{Inference Overhead vs LoRA} \label{app:inference}

LoRA-SB introduces a minimal inference cost overhead due to the insertion of the $r \times r$ matrix $R$ between $B$ and $A$, and the need for higher ranks to achieve comparable performance to LoRA. We benchmark the inference-time FLOPs and MACs across various models and find that the overhead is negligible.
This comparison is presented in Table \ref{tab:inference}, showing that the additional overhead of LoRA-SB is negligible.

\begin{table}[!htbp]
\centering
\caption{Inference cost comparison between LoRA-SB and LoRA across various models for a sequence length of 256. The minimum rank at which LoRA-SB matches or exceeds LoRA's performance is highlighted in \textbf{bold}.}
\setlength{\tabcolsep}{6pt}
\small
\begin{tabular}{lcc|cc}
\toprule
\textbf{Model} & \textbf{Method} & \textbf{Rank} & \textbf{MACs} & \textbf{FLOPs} \\
\midrule
\multirow{3}{*}{RoBERTa-large} 
& LoRA & $8$ & $77.86$ G & $155.79$ G \\
& LoRA-SB & $16$ & $78.42$ G & $156.91$ G \\
& \textbf{LoRA-SB} & $24$ & $78.97$ G & $158.01$ G \\
\midrule
\multirow{3}{*}{LlaMA-3.2 3B}
& LoRA & $32$ & $0.84$ T & $1.67$ T \\
& LoRA-SB & $64$ & $0.85$ T & $1.70$ T \\
& \textbf{LoRA-SB} & $96$ & $0.86$ T & $1.72$ T \\
\midrule
\multirow{3}{*}{Mistral 7B}
& LoRA & $32$ & $1.84$ T & $3.69$ T \\
& LoRA-SB & $64$ & $1.86$ T & $3.73$ T \\
& \textbf{LoRA-SB} & $92$ & $1.88$ T & $3.77$ T\\
\midrule
\multirow{3}{*}{Gemma-2 9B}
& LoRA & $32$ & $3.89$ T & $7.77$ T  \\
& \textbf{LoRA-SB} & $64$ & $3.93$ T & $7.86$ T \\
& LoRA-SB & $96$ & $3.97$ T & $7.94$ T \\
\bottomrule
\end{tabular}
\label{tab:inference}
\end{table}

\section{Experiment Details} \label{app:exps}

We use PyTorch \citep{paszke2019pytorch} and the HuggingFace Transformers library \citep{wolf2020transformers} for our implementations. We run all experiments on a \textbf{single NVIDIA A6000 GPU} and report results as the average of three random seeds. To save memory, we initialize base models in \texttt{\textbf{torch.bfloat16}} precision. We trained all models using the AdamW optimizer \citep{loshchilov2019decoupledweightdecayregularization}. \textbf{We compute the update approximation using only $\mathbf{1/1000}$ of each dataset's total number of samples}. The samples are randomly selected from the training set in each run. 

For arithmetic and commonsense reasoning tasks, we set up Mistral-7B, Gemma-2 9B, and Llama-3.2 3B with hyperparameters and configurations listed in Table \ref{tab:hyper_it}. We adopted most settings from previous studies \citep{cr-dataset} but conducted our own learning rate sweep. Following LoRA-XS guidelines, we set $\alpha = r$ for their baseline configuration.

For the GLUE benchmark using RoBERTa-large, you can find the hyperparameter details in Table \ref{tab:hyper_roberta}. We mostly adhered to the original configurations from the LoRA paper \citep{lora} but adjusted the learning rate through a sweep. In line with LoRA-XS settings, we fixed $\alpha$ at $16$ for their baseline. 

For all tasks, we followed the baseline configurations provided in the PiSSA \cite{pissa}, rsLoRA \cite{rslora}, DoRA \cite{Liu_Wang_Yin_Molchanov_Wang_Cheng_Chen_2024}, and LoRA-Pro \cite{LoRA-Pro} papers for our comparisons.

\begin{table}[ht]
\centering
\caption{
Hyperparameter settings for training Mistral-7B and Gemma-2 9B on MetaMathQA, and Llama-3.2 3B on \textsc{Commonsense170K}.}
\begin{tabular}{l|cc}
\hline
\toprule
 & \textbf{Mistral-7B / Gemma-2 9B} & \textbf{Llama-3.2 3B}\\
\midrule

Optimizer & \text{AdamW} & \text{AdamW} \\
Batch size & $1$ & $6$ \\
Max. Seq. Len & $512$ & $256$ \\
Grad Acc. Steps & $32$ & $24$ \\
Epochs & $1$ & $2$ \\
Dropout & $0$ & $0.05$\\
Learning Rate & $1\times10^{-4}$ & $2\times10^{-3}$\\
LR Scheduler & Cosine & Linear\\
Warmup Ratio & $0.02$ & $0.02$\\
\bottomrule
\end{tabular}

\label{tab:hyper_it}
\end{table}

\begin{table}[h]
   \centering
      \caption{Hyperparameter settings for RoBERTa-large on GLUE.}
   \begin{tabular}{l|cccccc}
    \hline
    \toprule
    & \textbf{CoLA} & \textbf{RTE} & \textbf{MRPC} & \textbf{SST-2} & \textbf{QNLI} & \textbf{STS-B} \\
    \midrule
    Optimizer & \multicolumn{6}{c}{AdamW} \\
    Batch size & \multicolumn{6}{c}{128}\\
    Max Seq. Len. & \multicolumn{6}{c}{$256$} \\
    Epochs & $30$ & $30$ & $30$ & $15$ & $15$ & $30$\\
    Dropout & \multicolumn{6}{c}{$0$}\\
    Learning Rate  & \multicolumn{6}{c}{$1\times10^{-3}$}\\
    LR Scheduler & \multicolumn{6}{c}{Linear} \\
    Warmup Ratio   & \multicolumn{6}{c}{$0.06$} \\
    \bottomrule
   \end{tabular}

   \label{tab:hyper_roberta}
\end{table}

\section{Dataset Details} \label{app:datasets}

The \textbf{MetaMathQA} dataset \citep{metamathqa} creates mathematical questions by rephrasing existing ones from different viewpoints, without adding new information. We assess this dataset using two benchmarks: \textbf{GSM8K} \citep{gsm8k}, which consists of grade-school math problems requiring multi-step reasoning, and \textbf{MATH} \citep{math}, which presents difficult, competition-level math problems.
Evaluation focuses solely on the final numeric answer.

\textsc{\textbf{CommonSense170K}} is a comprehensive dataset that consolidates eight commonsense reasoning datasets \citep{cr-dataset}.  Each example is framed as a multiple-choice question where the model generates the correct answer without explanations. We use the prompt template from \citep{cr-dataset}. The individual datasets used are described below:

\begin{enumerate} 
    \item \textbf{HellaSwag} \citep{zellers2019hellaswag} challenges models to select the most plausible continuation of a given scenario from multiple possible endings. 
    \item \textbf{ARC Easy} (or \textbf{ARC-e}) \citep{clark2018think} includes basic science questions at a grade-school level, offering simpler tasks to assess fundamental reasoning abilities. 
    \item \textbf{PIQA} \citep{bisk2020piqa} evaluates physical commonsense reasoning, where models must choose the best action to take in a hypothetical scenario. 
    \item \textbf{SIQA} \citep{sap2019socialiqa} tests social commonsense reasoning by asking models to predict the social consequences of human actions. 
    \item \textbf{WinoGrande} \citep{sakaguchi2021winogrande} presents sentence completion tasks requiring commonsense reasoning to select the correct binary option. 
    \item \textbf{ARC Challenge} (or \textbf{ARC-c}) \citep{clark2018think} consists of more complex science questions designed to challenge models with sophisticated reasoning, beyond simple co-occurrence patterns. 
    \item \textbf{OBQA} \citep{mihaylov2018can} features open-book, knowledge-intensive QA tasks that require multi-hop reasoning across multiple information sources. 
    \item \textbf{BoolQ} \citep{clark2019boolq} involves answering yes/no questions based on real-world, naturally occurring queries. 
\end{enumerate}

The \textbf{GLUE Benchmark} is a comprehensive collection of tasks designed to evaluate natural language understanding (NLU) abilities. It included various datasets, including \textbf{STS-B} for measuring semantic textual similarity \citep{cer2017semeval}, \textbf{RTE} for recognizing textual entailment, \textbf{MRPC} for detecting paraphrases \citep{dolan2005automatically}, \textbf{CoLA} for assessing linguistic acceptability \citep{warstadt-etal-2019-neural}, \textbf{SST-2} for sentiment analysis \citep{socher2013recursive}, and \textbf{QNLI} for question-answer inference \citep{rajpurkar2018know}. GLUE’s broad scope makes it a standard benchmark for evaluating models like RoBERTa.


\section{Use of Large Language Models} \label{app:llm_use}

LLMs are only used for small writing improvements, like polishing grammar and smoothing out phrasing.

\end{document}